\newcommand{\cL}{{\mathcal L}}
\newcommand{\R}{{\mathbb R}}
\newcommand{\by}{{\bf y}}
\newcommand{\bx}{{\bf x}}
\newcommand{\tr}{\operatorname{tr}}
\newcommand{\cU}{{\mathcal U}}
\newcommand{\eps}{{\epsilon}}
\newtheorem{theorem}{Theorem}
\newtheorem{proposition}[theorem]{Proposition}
\newenvironment{proof}[1][\proofname]{
\par\normalfont\trivlist\item[\hskip\labelsep\textbf{#1}.]\ignorespaces}
{\hfill $\square$ 
\endtrivlist}
\newcommand{\proofname}{Proof}
\title{Training NTK to Generalize with KARE}
\author{%
  Johannes Schwab \\
  Ecole Polytechnique Fédérale de Lausanne\\
  Quartier UNIL-Dorigny, Extranef 214\\
    CH – 1015 Lausanne, Switzerland\\
  \texttt{johannes.schwab@epfl.ch}
  \AND
  Bryan Kelly\\
  Yale School of Management\\
  165 Whitney Ave. \\
  New Haven, CT 06511\\
  \texttt{bryan.kelly@yale.edu} \\
   \And
   Semyon Malamud\\
   Ecole Polytechnique Fédérale de Lausanne \\
   Quartier UNIL-Dorigny, Extranef 213\\
    CH – 1015 Lausanne, Switzerland \\
  \texttt{semyon.malamud@epfl.ch} 
  \And Teng Andrea Xu\\
  AQR Capital Management\\
  15 Bedford St.
  London, United Kingdom.\\
  andrea.xu@aqr.com
}
\begin{document}

\maketitle

\begin{abstract}
The performance of the data-dependent neural tangent kernel (NTK; \cite{jacot2018neural}) associated with a trained deep neural network (DNN) often matches or exceeds that of the full network. This implies that DNN training via gradient descent implicitly performs kernel learning by optimizing the NTK. In this paper, we propose instead to optimize the NTK explicitly. Rather than minimizing empirical risk, we train the NTK to minimize its generalization error using the recently developed Kernel Alignment Risk Estimator (KARE; \cite{jacot2020kernel}). Our simulations and real data experiments show that NTKs trained with KARE consistently match or significantly outperform the original DNN and the DNN-induced NTK (the after-kernel). These results suggest that explicitly trained kernels can outperform traditional end-to-end DNN optimization in certain settings, challenging the conventional dominance of DNNs. We argue that explicit training of NTK is a form of over-parametrized feature learning.
\end{abstract}

\section{Introduction}\label{sec:intro}

We consider the classical framework of supervised learning based on empirical risk minimization (ERM). Given training data consisting of input-output pairs \( \ (x_i, y_i) \in \R^d\times \R, i\ =\ 1, \ldots, n \), a parametric model \(f(x ; \theta): \R^d \to \R \), and a loss function \(\ell: \R\times \R \to \R_+\), ERM seeks
\begin{equation}\label{loss}
\theta_* = \arg\min_\theta \cL(\theta),\  \cL(\theta)\ =\ \frac1n\sum_{i=1}^n \ell(y_i, f(x_i; \theta))\,.
\end{equation}

Under standard assumptions--such as low model complexity and i.i.d.\ data--ERM yields good generalization: the empirical risk approximates the expected risk,
\begin{equation}
\mathbb{E}_{\bx,\by}[\ell(\by, f(\bx; \theta))],
\end{equation}
ensuring predictive performance on unseen data.

However, deep neural networks (DNNs) violate these assumptions. Modern architectures are overparameterized and often interpolate the training data, achieving zero training loss. In such settings, minimizing empirical risk does not guarantee low expected risk. Ideally, we would minimize expected risk directly, but this is typically infeasible.

\textit{In this paper}, we contribute to the literature by demonstrating that the after-NTK can be naturally interpreted as a ``\textit{gradient booster}" built on top of a DNN. Motivated by this theoretical insight, we propose a novel machine learning approach that directly leverages this gradient-boosting perspective to minimize the generalization error of deep neural networks.

To minimize the generalization error, we leverage two insights from the DNN literature. The first is the \emph{Neural Tangent Kernel (NTK)}~\cite{jacot2018neural}, which is a powerful analytical tool for characterizing the behaviors of wide DNNs. The second is the \emph{Kernel Alignment Risk Estimator (KARE)} introduced by \cite{jacot2020kernel}, which consistently estimates the {\it expected risk} of a kernel ridge regression, particularly for overparameterized kernel machines such as NTK.  

The key insight of our approach is two-fold: 1) we can avoid training the DNN outright and instead explicitly train its NTK, and 2) by using KARE (rather than ERM) as the training objective, we can directly train the NTK to generalize. We refer to our generalization-driven kernel learning approach as {\it NTK-KARE}.  

Empirically, we compare NTK-KARE to the corresponding trained DNN and its after-training NTK.  
Consistent with the prior literature, we find that the after-training NTK without KARE performs similarly to the trained DNN. However, the generalization performance of NTK-KARE consistently exceeds that of the DNN and after-training NTK across simulations and real datasets.

This paper is structured as follows. Section~\ref{sec:literature} reviews recent theoretical and empirical findings related to the NTK and after-NTK. In Section~\ref{sec:prelim}, we formally introduce relevant mathematical concepts, present our theoretical contribution, and propose a novel algorithm to explicitly train the NTK. Section~\ref{sec:experiments} presents empirical results, while Section~\ref{sec:discussion} discusses the limitations of our approach and outlines directions for future research.

\section{Related Work}
\label{sec:literature}

The emergence of the Neural Tangent Kernel (NTK; \cite{jacot2018neural}) and its intricate connection to gradient descent dynamics (\cite{chizat1812note}; \cite{daniely2016toward, daniely2017sgd}; \cite{arora2019fine}) has paved new pathways for understanding the behavior of DNNs. To date, however, NTK has served as a theoretical tool to describe DNN training dynamics and generalization, often summarizing the properties of a DNN at initialization or once the DNN has been trained (the ``after-training" NTK).  

In contrast, we leverage NTK as a machine learning model to be trained directly. The consequences of this are two-fold. First, with parameterization and structure that mirrors DNNs, NTK-KARE further migrates the landscape of kernel learning from the classical setting of relatively low-dimensional models--e.g., the Gaussian kernel
$K(x,\tilde{x};\theta)=\exp(-\|x-\tilde{x}\|^2/\theta^2)$ with parameter $\theta\in\mathbb{R}$ and the anisotropic kernel $K(x,\tilde{x};\theta)=\exp(-(x-\tilde{x})^\top\theta(x-\tilde{x})),\ \theta\in\mathbb{R}^{d\times d}$ (as in \cite{radhakrishnan2024mechanism})--to the setting of heavily parameterized kernels with larger expressive power and scalability (\cite{wilson2016deep}). 
The second consequence is that, thanks to its natural conjugacy with KARE, we can directly optimize NTK's generalizability.  
Unlike DNNs, kernel regressions allow approximation of generalization error via data-dependent surrogates as in KARE\footnote{KARE is closely related to the classical Generalized Cross-Validation (GCV) estimator of \cite{golub1979generalized, craven1978smoothing}. The consistency of the GCV estimator in high-dimensional settings has been studied in numerous papers; see, e.g., \cite{hastie2019surprises, xu2021consistent, patil2021uniform}.} (\cite{jacot2020kernel}, see also \cite{wei2022more} and \cite{misiakiewicz2024non}).  In fact, our KARE-based training approach can be coupled with any kernel machine, including that of (\cite{wilson2016deep}).

A growing body of empirical work documents that the after-training NTK is highly predictive {\it without being directly trained}, often outperforming the trained DNN from which the NTK is derived (\cite{fort2020deep, geiger2020disentangling, atanasov2021neural, vyas2022limitations, lauditi2025adaptive}). This hints at fascinating potential for the directly-trained NTK.

An emerging literature emphasizes that feature learning is crucial for the success of modern DNNs. A particularly notable contribution by \cite{radhakrishnan2024mechanism} demonstrates that the mechanism behind feature learning in DNNs can be understood as a form of kernel learning. Traditionally, however, kernel learning has focused on kernels with low {\it feature complexity}--defined as the ratio of the number of learnable kernel parameters to the number of kernel features.\footnote{The kernel function can be viewed as an infinite-dimensional feature machine (see, e.g., \cite{simon2023more}), thus the effective complexity of a classical kernel machine such as that in \cite{radhakrishnan2024mechanism} is zero.}  In contrast, the Neural Tangent Kernel (NTK) possesses a {\it feature complexity of one} because $\dim(\theta)=\dim(\nabla_\theta f)$.\footnote{\cite{jacot2018neural} establish that in the infinite-width limit with standard initialization, NTK remains nearly constant throughout training. Consequently, the DNN converges to a kernel regression predictor using a fixed kernel. As shown by \cite{yang2021tensor}, networks trained in the constant-NTK regime lack the ability to learn features. For more practical finite-width networks, the NTK evolves significantly throughout training (\cite{vyas2022limitations}), and this evolution is critical to achieving strong generalization.} 
Over-parametrized kernel machines, derived from DNNs but directly optimizable, open new avenues for understanding the role of feature learning, over-parametrization, and generalization in machine learning (see, e.g., \cite{belkin2018reconciling, belkin2018understand, hastie2019surprises}).

\section{Main results}\label{sec:prelim}
\subsection{Preliminaries}
\noindent \textbf{Kernel Ridge Regression (KRR).} Throughout this paper, we adopt standard vector notation, defining the data matrix $X\ =\ [x_1, x_2, \ldots, x_n]^\top \in \R^{n\times d},$ as the collection of in-sample stacked observations, and the vector $y\ =\ [y_1, y_2, \ldots, y_n]^\top \in \R^n$ as the corresponding response variables. Thus, given a positive-definite kernel function $K,$ Kernel Ridge Regression (KRR) explicitly yields the prediction function as
\begin{equation}
    \hat y(x) = \frac{1}{n}K(x, X) \Big( \frac{1}{n} K(X,X) + zI_n \Big)^{-1} y, 
\end{equation}
where $z\in \R$ is the ridge penalty parameter, $K(x, X) = (K(x,x_i))_{i = 1}^n \in \R^n$  is the vector of kernel evaluations between a point $x$ and training points, and $K(X,X) = (K(x_i, x_j))_{i, j=1}^n \in \R^{n\times n}$ is the kernel matrix evaluated on the training data..

\noindent \textbf{NTK and data-dependent NTK.} We refer to $f(x;\theta)$ as ``the DNN" parametrized by $\theta\in \R^P$ whose corresponding NTK~(\cite{jacot2018neural} is defined as 
\begin{equation}\label{NTK-def}
K(x, \tilde{x}; \theta) = \nabla_\theta f(x; \theta)^\top \nabla_\theta f(\tilde{x}; \theta).
\end{equation}
It has been shown that, at initialization, the NTK remains nearly constant for sufficiently wide neural networks (see, e.g.,~\cite{jacot2018neural, lee2019wide, zou2019improved, du2019gradient}). In practice, however, for finite-width neural networks trained over an extended period, the empirical kernel obtained after training, often referred to as the data-dependent NTK~\cite{fort2020deep}, typically exhibits substantial changes relative to its initial state. Training the DNN amounts to optimizing the parameters $\theta$ using gradient descent: \begin{equation}\label{gd1}
\theta_{t+1}\ =\ \theta_t - \eta\,\nabla_\theta\mathcal{L}(\theta_t)\,,
\end{equation}
where $t\in \mathbb N_+$ denotes the iteration step and $\eta > 0$ is the learning rate. We denote by $\theta_T$ the optimized DNN parameters after $T$ steps and define the data-dependent NTK as
\begin{equation}\label{eq:after-ntk}
K(x, \tilde{x}; \theta_T) = \nabla_{\theta_T} f(x; \theta_T)^\top \nabla_\theta f(\tilde{x}; \theta_T).
\end{equation}

\subsection{{Data-Dependent NTK: Gradient Boosting}}
\label{sec:gradient-boosting}
We now show how the after-training NTK emerges as a form of ``gradient boosting" and then discuss how, perhaps surprisingly, it is the main event and can be used to replace the original DNN entirely. 

It is known that the NTK evolves as the parameters update during gradient descent, eventually stabilizing in the latter stages of DNN training~\cite{fort2020deep}. At that point, the model admits a natural decomposition:

\begin{proposition}\label{prop:gron} Let $\hat y = f(X; \theta)$ and $\ell_{\hat y} \coloneq \nabla_{\hat y} \ell$. Suppose that $f$ is differentiable, $\ell_{\hat y}$ is continuous, and that $\ell$ is such that, for any $R>0,$ the set $\{v\in \R:\ell(y,v)<R\}$ is bounded and $\ell\ge -A$ for some $A>0$. Suppose also that NTK stabilizes after $T$ epochs: $\|K(x;X;\theta_t)-K(x;X;\theta_T)\|\le \eps$ for all $t\ge T.$ Then, 
\begin{equation}\label{main-dec}
f(x;\theta_t)\ =\ \underbrace{f(x;\theta_{T})}_{trained\ DNN}\ +\underbrace{K(x,X;\theta_{T})\,\cU_t}_{trained\ kernel\ machine}\ +\ O(\eps)\,
\end{equation}
for some vector $\cU_t$ that depends on the training data.
\end{proposition}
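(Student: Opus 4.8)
The plan is to pass to the gradient-flow limit of~\eqref{gd1} (the Euler and linearization errors are controlled for small $\eta$ and absorbed, or one repeats the argument with sums in place of integrals) and to exploit the chain rule that converts the parameter dynamics into a kernel dynamics on function values. Write $g_t \coloneq \ell_{\hat y}(y, f(X;\theta_t)) \in \R^n$ for the loss gradients along the trajectory, so that $\dot\theta_t = -\nabla_\theta\cL(\theta_t) = -\tfrac1n\nabla_\theta f(X;\theta_t)^\top g_t$. Then for any $x$,
\[
\frac{d}{dt} f(x;\theta_t) = \nabla_\theta f(x;\theta_t)^\top\dot\theta_t = -\frac1n\,\nabla_\theta f(x;\theta_t)^\top\nabla_\theta f(X;\theta_t)^\top g_t = -\frac1n\,K(x,X;\theta_t)\,g_t .
\]
Integrating from $T$ to $t$, setting $\cU_t \coloneq -\tfrac1n\int_T^t g_s\,ds$, and adding and subtracting the frozen kernel gives
\[
f(x;\theta_t) - f(x;\theta_T) = K(x,X;\theta_T)\,\cU_t \;-\; \frac1n\int_T^t\bigl(K(x,X;\theta_s)-K(x,X;\theta_T)\bigr)g_s\,ds .
\]
The coefficient $\cU_t$ is a functional of the training trajectory, hence of the data, and it is the claimed ``trained kernel machine'' term. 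It remains to show the last integral is $O(\eps)$; by the stabilization hypothesis its absolute value is at most $\tfrac{\eps}{n}\int_T^t\|g_s\|\,ds$, so everything reduces to a bound on $\int_T^t\|g_s\|\,ds$.

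This is exactly where the structural assumptions on $\ell$ enter. Along the flow $\tfrac{d}{dt}\cL(\theta_t) = -\|\nabla_\theta\cL(\theta_t)\|^2 \le 0$, so $\cL$ is nonincreasing, and since $\ell\ge -A$ it is bounded below. Hence, with $R \coloneq n\cL(\theta_0) + (n-1)A$, we have $\ell(y_i, f(x_i;\theta_t)) \le R$ for all $t$ and $i$, so the in-sample predictions $f(X;\theta_t)$ stay, for all $t$, in the fixed bounded set determined by the sublevel sets $\{v : \ell(y_i,v) < R+1\}$; call its closure $\mathcal K$. Continuity of $\ell_{\hat y}$ then yields a uniform bound $\|g_t\| \le M \coloneq \max_{v\in\mathcal K}\|\ell_{\hat y}(y,v)\|$. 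Over any fixed training horizon this already gives $\tfrac{\eps}{n}\int_T^t\|g_s\|\,ds \le \tfrac{M}{n}(t-T)\,\eps = O(\eps)$, which establishes~\eqref{main-dec}.

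I expect the main obstacle to be the reading of~\eqref{main-dec} that is uniform in $t$: this needs $\int_T^\infty\|g_s\|\,ds < \infty$, i.e.\ integrability rather than mere boundedness of the loss gradients. I would obtain it from quantitative convergence of the (nearly) frozen-kernel flow $\dot{\hat y}_t = -\tfrac1n K(X,X;\theta_t)g_t$: when $K(X,X;\theta_T)$ is positive definite and $\ell(y,\cdot)$ is convex (e.g.\ the square loss), this is, up to the $O(\eps)$ kernel perturbation, a preconditioned gradient flow for a strongly convex objective, so $\|g_t\|$ decays geometrically and the tail integral converges; a Grönwall comparison of the true trajectory against the exact frozen-kernel trajectory then propagates the $O(\eps)$ bound to all $t\ge T$ simultaneously. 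Under only the hypotheses as stated one can guarantee convergence of $\cL(\theta_t)$ but not of the predictions, so I would either read~\eqref{main-dec} over a finite horizon or add the mild convexity/positive-definiteness assumption needed for the uniform statement.
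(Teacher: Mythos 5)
Your proposal is correct and follows essentially the same route as the paper: derive the gradient-flow ODE $\tfrac{d}{dt}f(x;\theta_t)=-\tfrac1n K(x,X;\theta_t)\,\ell_{\hat y}(y,f(X;\theta_t))$, compare against the frozen-kernel trajectory, and bound the discrepancy by $\eps$ times the time-integral of the loss gradients, which is controlled via monotonicity of the loss and the bounded-sublevel-set assumption. Your closing caveat about uniformity in $t$ is a fair observation, but it applies equally to the paper's own bound, which also grows linearly in $t-T$.
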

\begin{proof}
We begin by considering the first-order Taylor expansion of the DNN $f(x; \theta_{t+1})$ after $t+1$ gradient descent updates, as defined in equation~\eqref{gd1}: 
\begin{equation}\label{ode0}
\begin{aligned}
f(x;\theta_{t+1})\ &=\ f(x;  \theta_t\ -\ \eta\,\nabla_\theta\cL(\theta_t))\\
&\hspace{-0.4em}\underbrace{\approx}_{\mathclap{\text{1st order Taylor Approx.}}}\,f(x;  \theta_t)\ -\ \eta \nabla_\theta f(x;\theta_t)^\top \nabla_\theta\cL(\theta_t)\,.
\end{aligned}
\end{equation}
Now, 
\begin{equation}
\begin{aligned}
&\nabla_\theta\cL(\theta)\ =\ \nabla_\theta \left(\frac1n \sum_{i=1}^n \ell(y_i,f(x_i;\theta))\right)\\
&=\ \frac1n \sum_{i=1}^n \nabla_\theta\ell(y_i,f(x_i;\theta))\\
&\ =\ \frac1n\sum_{i=1}^n \ell_{\hat y}(y_i,f(x_i;\theta))\, \nabla_\theta f(x_i;\theta)
\end{aligned}
\end{equation}
Substituting, we obtain {\it gradient descent in the prediction space:}
\begin{equation}
\begin{aligned}
&\nabla_\theta f(x;\theta_t)' \nabla_\theta\cL(\theta_t)\\
&=\ \nabla_\theta f(x;\theta_t)'  \left( {\frac{1}{n}} \sum_{i=1}^n \ell_{\hat y}(y_i,f(x_i;\theta_t))\, \nabla_\theta f(x_i;\theta_t)\right)\\
&=\ {\frac{1}{n}} \sum_{i=1}^n \ell_{\hat y}(y_i,f(x_i;\theta_t))\, \nabla_\theta f(x;\theta_t)^\top \nabla_\theta f(x_i;\theta_t)\,.
\end{aligned}
\end{equation}
Using the NTK definition~\eqref{NTK-def}, replacing $\eta$ with $\eta dt,$ and taking the limit as $dt\to 0$, we can rewrite \eqref{ode0} as
\begin{equation}\label{grad-flow} \frac{d}{dt}f(x;\theta_t) = -\frac{\eta}{n}\,K(x,X;\theta_t) \,\ell_{\hat y}(y;f(X;\theta_t))\,, 
\end{equation} 
Observe that 
\begin{equation}
\frac{d}{dt}\sum_{i=1}^n \ell(y_i,f(X_i;\theta_t))\ =\ -\ell_{\hat y}(y;f(X;\theta_t))^\top \eta K(x,X;\theta_t)\,\ell_{\hat y}(y;f(X;\theta_t))\ \le\ 0\,
\end{equation}
because $K$ is positive semi-definite. From the assumptions made about $\ell,$ we immediately get that $f(X;\theta_t)$ stays uniformly bounded. Let $\check f$ satisfy 
\begin{equation}
\begin{aligned}
&\frac{d}{dt}\check f(x;\theta_t)\ =\ -\eta K(x,X;\theta_T)\,\frac1n\ell_{\hat y}(y;f(X;\theta_t))\ ,\ \check f(x;\theta_T)=f(x;\theta_T)\,. 
\end{aligned}
\end{equation}
Then, 
\begin{equation}
\begin{aligned}
&\|\check f(x;\theta_t)-f(x;\theta_t)\|\ =\ \|\int_t^T\frac{\eta}{n} (K(x,X;\theta_\tau)-K(x,X;\theta_T))\ell_{\hat y}(y;f(X;\theta_\tau))d\tau\|\\
&\ \le\ \left|\int_t^T\frac{\eta}{n} \|(K(x,X;\theta_\tau)-K(x,X;\theta_T))\ell_{\hat y}(y;f(X;\theta_\tau))\|d\tau\right|\\
&\ \le\ \eps\,\frac{\eta}{n} (t-T)\sup_{\tau\in [T,t]}\|\ell_{\hat y}(y;f(X;\theta_\tau))\|,
\end{aligned}
\end{equation}
where the latter supremum is finite because $\ell_{\hat y}$ is continuous and $f(X;\theta_t)$ stays uniformly bounded\,. The claim now follows because 
\begin{equation}
\check f(x;\theta_t)\ =\ f(x;\theta_T)\, {+}\, K(x,X;\theta_T)\frac{\eta}{n}  \underbrace{\int_t^T\ell_{\hat y}(y;f(X;\theta_\tau))d\tau}_{\mathcal{U}_t}\,.
\end{equation}

\end{proof}

The link between NTK and DNN is particularly clear for the MSE loss $\ell(y_i,\hat y_i)=(y_i -\hat y_i)^2.$ In this case, \eqref{main-dec} takes the form
\begin{equation}
f(x;\theta_t)\ \approx\ f(x;\theta_T)\ +\ \frac1n K(x,X;\theta_{T}) (zI+\frac1n K(X,X;\theta_{T}))^{-1}(y-f(X;\theta_T))
\end{equation}
for some ridge parameter $z.$  In other words, residual DNN training (for $t>T$) experiences a form of ``gradient boosting" in which  DNN residuals $y-f(X;\theta_T)$ are fit via kernel ridge regression (the kernel ridge predictor uses $\mathcal{U}(z) = (n^{-1} K(X, X;\theta_T) + z I)^{-1} y$ rather than the implicit $\mathcal{U}_t$ of \eqref{main-dec}). Note that the after-training NTK in \eqref{main-dec} is not directly optimized; it is just evaluated at the trained DNN parameters.

A striking discovery of \cite{fort2020deep, geiger2020disentangling, atanasov2021neural, vyas2022limitations, lauditi2025adaptive} is that replacing $f(x;\theta_T)$ with $0$ and changing $z$ with a judiciously chosen $\tilde z$ gives approximately the same result, 
\begin{equation}\label{main-equiv}
\begin{aligned}
f(x;\theta_T)\ &+\ \frac1n K(x,X;\theta_{T}) (zI+\frac1n 
 K(X,X;\theta_{T}))^{-1}(y-f(X;\theta_T))\\
&\approx\ \frac1n  K(x,X;\theta_{T}) (\tilde zI+\frac1n 
 K(X,X;\theta_{T}))^{-1}y\,. 
\end{aligned}
\end{equation}
In particular, the prediction performance of the after-training NTK $K(\cdot,\cdot;\theta_T)$ often matches or surpasses the DNN predictor $f(x;\theta_T)$. Evidently, the kernel component is not just a booster; it is the main event.  

\subsection{Training NTK \textit{Explicitly}}

As discussed above, the after-training NTK arises \emph{implicitly} as a consequence of training the DNN, yet it matches the performance of the trained network (as expressed by equation \eqref{main-equiv}). This motivates us to devise an approach to train NTK {\it explicitly.} To deal with the divergence between the empirical risk and generalization risk discussed in Section \ref{sec:intro}, we propose a novel algorithm for training kernels that minimizes an approximation of the expected risk recently introduced in~\cite{jacot2020kernel}. 

Given a positive definite kernel $K(\cdot,\cdot)$, a kernel matrix \( K(X, X)\in \R^{n\times n} \) with $X\in \R^{n\times d}$, labels \( y\in \R^n \), and a regularization parameter \( z\in \R_+\), \cite{jacot2020kernel} define an expected risk estimator called the Kernel Alignment Risk Estimator (KARE) as follows
\begin{equation}\label{KARE-def}
\text{KARE}(y, K(X,X), z) = \frac{\frac{1}{n} y^\top \left( \frac{1}{n} K(X,X) + z I_n \right)^{-2} y}
{\left( \frac{1}{n} \tr \left[ \left( \frac{1}{n} K(X,X) + z I_n \right)^{-1} \right] \right)^2}\,.
\end{equation}
\cite{jacot2020kernel} and \cite{misiakiewicz2024non} show that KARE closely approximates the expected risk of the kernel machine
\begin{equation}
\mathbb{E}_{\bx,\by}[(\by - K(\bx,X)(z I_n + K(X,X))^{-1}y)^2] \approx \text{KARE}(y, K(X,X), z).
\end{equation}
Thus, one can optimize a kernel $K(x,\tilde x;\theta)$ explicitly by training a neural network  $f(x; \theta^{\text{KARE}})$ to minimize KARE via gradient descent
\begin{equation}\label{KARE-gd-def}
\theta_{t+1}^{\rm KARE} = \theta_t^{\rm KARE} - \eta_{\rm KARE} \nabla_\theta \text{KARE}(y, K(X,X; \theta_t^{\rm KARE}), z_{\rm KARE}).
\end{equation}
This optimization yields the following estimator, which we term NTK-KARE
\begin{equation}\label{NTK-KARE}
\text{NTK-KARE}(x)\ =\ \frac{1}{n} K(x, X; \theta_T^{\text{\rm KARE}}) \left(n^{-1}K(X,X; \theta_T^{\text{\rm KARE}}) + \lambda I_n\right)^{-1} y, \\ 
\end{equation}
where $K(X,X; \theta_T^{\text{KARE}})\ =\ \nabla_\theta f(x; \theta_T^{\text{KARE}})^\top \nabla_\theta f(\tilde{x}; \theta_T^{\text{KARE}}).$

Note that NTK-KARE is \emph{unaware} of the parametric model $f(x;\theta)$; it only \emph{sees} the kernel $K$ and directly optimizes it. Thus, it implements a form of \emph{pure kernel learning}.

In the remainder of this paper, we demonstrate that explicitly training the NTK—thereby obtaining the NTK-KARE estimator defined in \eqref{NTK-KARE}—results in superior performance compared to the after-training NTK \eqref{eq:after-ntk} and the standard DNN across various settings. For clarity, we refer to the "DNN" as a standard neural network with parameters $\theta_T$ optimized via SGD by minimizing either the MSE loss or cross-entropy loss, depending on the task. We define the after-NTK estimator explicitly as follows:
\begin{equation}\label{after-NTK-est}
\text{after-NTK}(x)\ =\ \frac{1}{n} K(x, X; \theta_T) \left(n^{-1}K(X,X; \theta_T) + \lambda I_n\right)^{-1} y. \\ 
\end{equation}

See Tables ~\ref{tab:hyperparam-exp} and ~\ref{tab:hyperparam-exp-uci-openml} in the Appendix for the specific hyperparameters $\eta^{\text{\rm KARE}}$ and $z_{\text{\rm KARE}}$ used to run equation~\eqref{KARE-gd-def}.  
For both kernel predictors, the regularization parameter $\lambda$ is set proportionally to the normalized trace of the kernel matrix
\begin{equation}
\begin{aligned}
\lambda = \tilde \lambda \frac{1}{n^2}\tr(K(X,X;\theta_T))
\end{aligned}
\end{equation}
where $\tilde \lambda$ is a hyperparameter. Note that $n^{-1}\tr(K(X,X;\theta_T))$ converges to the trace of the kernel operator. Thus, $n^{-2}\tr(K(X,X;\theta_T))$ is the ``average eigenvalue" of this operator.

Additionally, one could consider a residual kernel machine of the form 
\begin{equation}
\begin{aligned}
    \hat f_{res}(x) = f(x;\theta_T) + K(x, X; \theta_T) (n^{-1}K(X,X;\theta_T) + \lambda I_n)^{-1}(y - f(X;\theta_T)).
\end{aligned}
\end{equation}
However, our experiments show that the plain kernel machine consistently outperforms this version--suggesting the DNN output $f(x;\theta_T) $ is \textit{redundant} once the kernel is known.


\section{Experiments}
\label{sec:experiments}
In this section, we compare NTK-KARE performance in both simulated and real datasets. First, we validate our theoretical results through a controlled experiment, demonstrating that NTK-KARE outperforms both the after-NTK and the standard DNN in learning a low-rank function. Second, moving to real datasets, and following~\cite{jacot2020kernel} analysis, we conduct similar evaluations with the MNIST and Higgs datasets. Finally, we assess NTK-KARE using the UCI dataset challenge~\cite{Delgado2014}, where we show that NTK-KARE not only outperforms the after-NTK and the DNN, but also matches the performance of Recursive Feature Machines (RFMs), the current state-of-the-art benchmark introduced by~\cite{radhakrishnan2024mechanism}.

We make all experiments reproducible, with code available at: \url{https://github.com/DjoFE2021/ntk-kare}.

A detailed description of the set of hyperparameters, the methodology for model selection, and the computational resources utilized in our experiments can be found in Appendix~\ref{app:num}.



\subsection{Simulated Data}
\label{sec:method}
\begin{figure}
\centering
\includegraphics[width=1\linewidth]{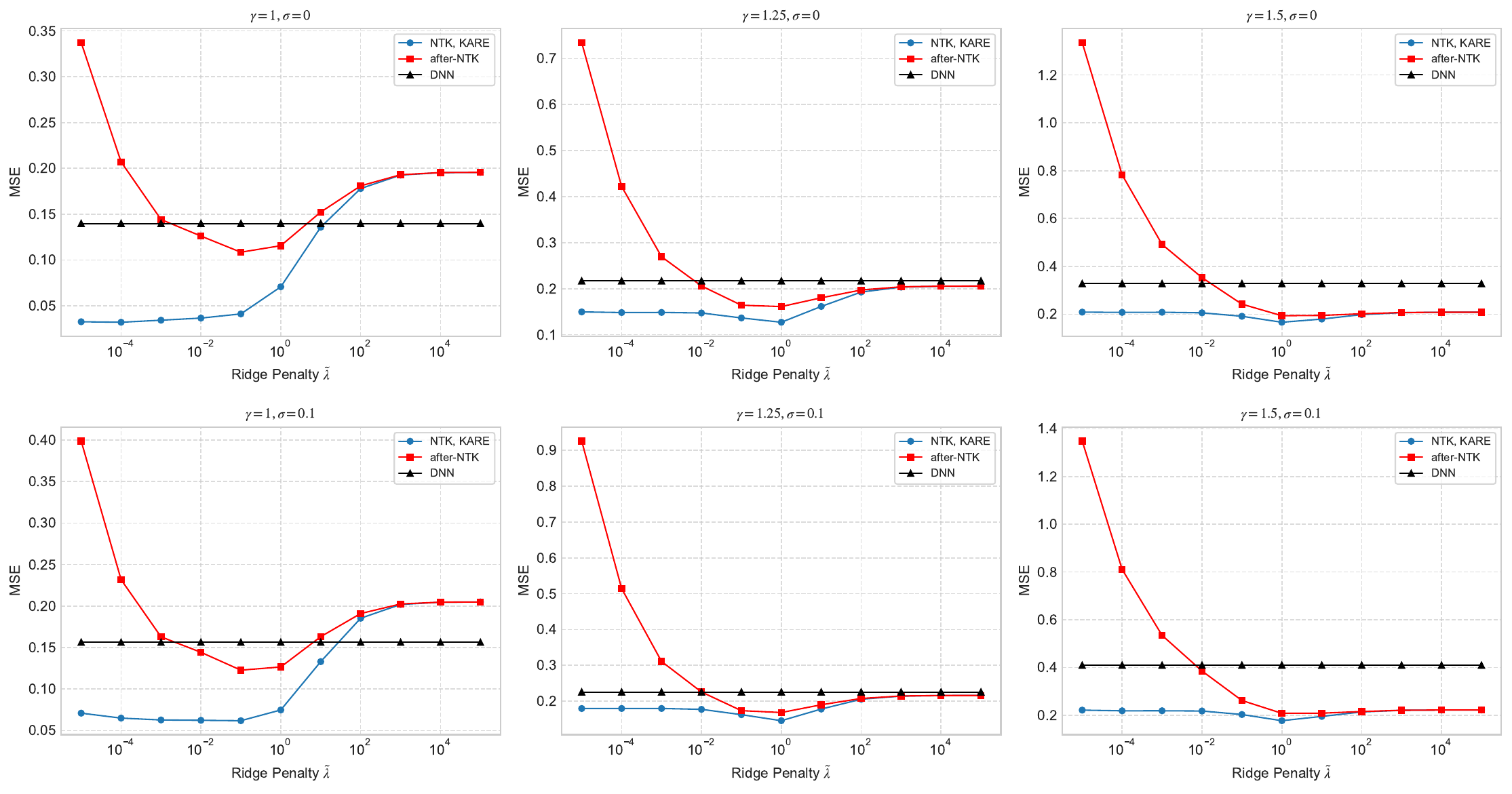}
\caption{Out-of-sample MSE on the synthetic dataset generated according to~\eqref{eq:DGP-sim} for NTK-KARE, after-NTK, and the DNN. We consider various parameterizations of the DGP by choosing $\gamma \in {1.0, 1.25, 1.5}$ (columns) and $\sigma \in {0.0, 0.1}$ (rows). NTK-KARE was fit using $z_{\rm KARE} = 0.1$. All other hyperparameters are provided in Table~\ref{tab:hyperparam-exp}.}
\label{fig:sim}
\end{figure}
\cite{ghorbani2020neural} shows that kernel methods typically fail to learn low-rank functions in small samples efficiently, and \cite{pmlr-v97-rahaman19a} shows that the difficulty of approximating a function depends on how quickly it oscillates. Motivated by these results, we study a synthetic regression problem where the response variable \( y_i \) is generated according to  
\begin{equation}\label{eq:DGP-sim}  
y_i = f_*(x_i)\ +\ \epsilon_i, \quad i=1,\dots,n,\ x_i\sim \mathcal{N}(0,I_{d\times d}),  
\end{equation}  
where the true function $f_*:\R^d\ \to \R$ is defined as \( f_*(x)= \cos(\gamma\, x^\top w) / (1 + e^{\gamma\, x^\top w}) \) with the parameter \( \gamma \) controlling the frequency of oscillation and the weight vector \( w \in \mathbb{R}^d \) sampled independently as \( w\sim \mathcal{N}(0,I_{d\times d}) \). The noise terms \( \epsilon_i \) are i.i.d. draws from a Gaussian distribution \( \mathcal{N}(0, \sigma^2) \). We consider \( \gamma \in \{1,1.5\} \) and $\sigma \in \{0,0.1\}$. We set the dimension to \( d=10 \), sufficiently large to trigger the curse of dimensionality, and choose a relatively small sample size \( n=1'000 \). Both the neural network and kernel methods employed are based on a one-layer network with 64 neurons. We repeat the experiment $k=10$ times for each setting and report the average out-of-sample MSE across these runs. 

Figure \ref{fig:sim} shows that the best generalizer is NTK-KARE, followed by the after-training kernel NTK, and both outperform the original trained network in every experiment we run. 

\subsection{MNIST and Higgs Datasets}
\begin{figure}
\centering
\includegraphics[width=1\linewidth]{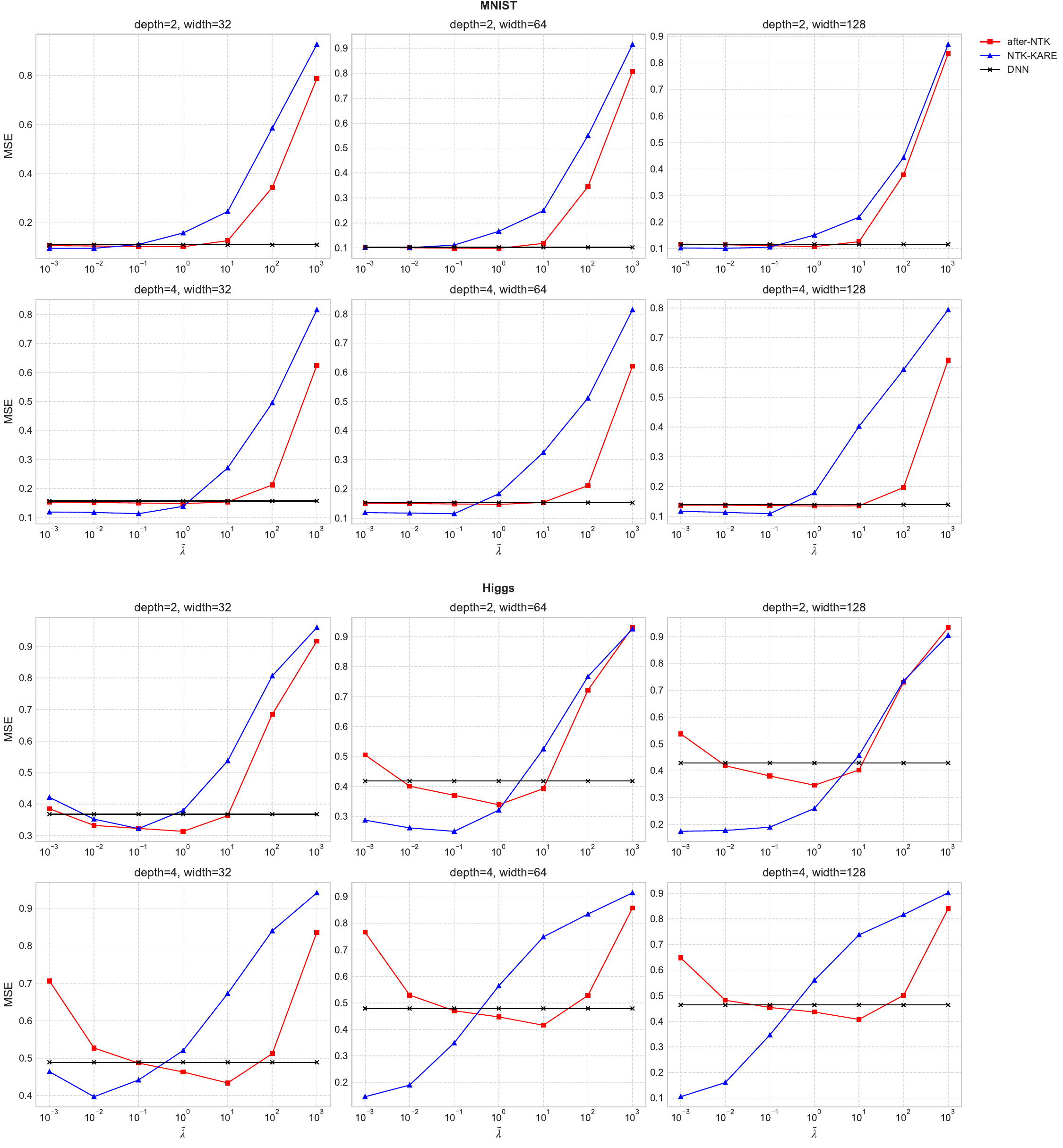}
\caption{Out-of-sample mean-squared-error on the Mnist and Higgs dataset for NTK-KARE, after-NTK, and the DNN model using $n=1'000$ observations to train the models. We consider various DNN architecture by varying network $\text{depth} \in \{2, 4 \}$ (rows) and $\text{width} \in \{32, 64, 128 \}$ (columns). We used $z_{\rm KARE} = 0.1$ to fit NTK-KARE. All the other hyperparameters can be found in Table \ref{tab:hyperparam-exp}}
\label{fig:higgs-mnist}
\end{figure}
Motivated by the analysis in \cite{jacot2020kernel}, we compare the performance of our kernel methods on the MNIST and Higgs datasets. See Section \ref{det-sec} in the Appendix for details. Figure \ref{fig:higgs-mnist} reports our results. For each architecture, we observe that NTK-KARE consistently outperforms both the DNN and after-NTK. We note that the choice of an optimal ridge penalty is essential for kernel ridge regression, consistent with the findings of \cite{simon2023more} and \cite{kelly2024virtue} who show that kernels associated with large DNNs perform best with an optimally chosen ridge penalty. As above, we run gradient descent $k=10$ times and report the MSE averaged across these $k$ runs. On the Higgs dataset, NTK-KARE attains lower out-of-sample MSE than the other methods across all architectures at the optimal ridge penalty. On MNIST, which is known to be a relatively easy benchmark and thus offers limited room for performance improvement, NTK-KARE still achieves noticeable reductions in MSE relative to both the DNN and after-NTK estimators, particularly for deeper network architectures.

\subsection{UCI Classification Datasets}
\begin{table}
  \centering
  \caption{Test results on UCI classification datasets.}
  \label{tab:res-uci}
\begin{tabular}{lccccc}
\toprule
Model & Avg. Accuracy (\%) & P90 (\%) & P95 (\%) & PMA (\%) & Friedman Rank \\
\midrule
\bf{NTK-KARE} & \bf{85.45} & 92.56 & 85.12 & 97.17 $\pm$ 5.03 & 18.34\\
\bf{after-NTK} & 85.13  & \bf{93.39} & 80.16 & 96.78 $\pm$ 4.44 & 19.52 \\
\midrule
RFM & 85.37 & 92.56 & \bf{85.96} & \bf{97.36 $\pm$ 4.04} & \bf{17.79}\\
Laplace Ridge & 83.76 & 90.08 & 74.38 & 95.95 $\pm$ 5.41 & 28.48 \\
NTK Ridge & 82.70 & 85.95 & 68.60 & 94.84 $\pm$ 8.17 & 33.55\\
Random Forest & 81.96 & 83.47 & 68.60 & 93.48 $\pm$ 12.10 & 33.52 \\
Gaussian SVM & 81.81 & 82.35 & 69.75 & 93.21 $\pm$ 11.37 & 37.50\\
DNN & 79.37 & 73.55 & 53.72 & 91.14 $\pm$ 12.81 & 44.13\\
\bottomrule
\end{tabular}
\end{table}

Finally, we show that both NTK-KARE and after-NTK achieve state-of-the-art results on the UCI dataset challenge introduced by~\cite{Delgado2014}. This challenge includes a broad variety of competing models, such as kernel methods, deep neural networks, and tree-based methods. Following the evaluation protocol of~\cite{radhakrishnan2024mechanism, arora2019harnessing}, we report several standard performance metrics in Table~\ref{tab:res-uci}, including the average accuracy across all datasets; the percentages of datasets (denoted P90 and P95) on which the classifier attains accuracy within 90\% and 95\%, respectively, of the best-performing model; the percentage of the maximum accuracy (PMA), calculated as the classifier's accuracy relative to the best model, averaged over all datasets; and the Friedman rank, measuring the average rank of the classifier across datasets. 

Table~\ref{tab:res-uci} summarizes the performance of NTK-KARE and after-NTK across 121 datasets, alongside results from standard benchmark methods reported in~\cite{radhakrishnan2024mechanism}. Consistent with our earlier findings, NTK-KARE exhibits stronger overall performance compared to after-NTK, benefiting from the \textit{gradient boosting} effect described in Section~\ref{sec:gradient-boosting}. 

NTK-KARE achieves the highest average accuracy among all considered methods and performs comparably to RFMs, a recently proposed state-of-the-art method known for its exceptional performance and powerful feature-learning capabilities, across all other metrics.

It is noteworthy that NTK-KARE outperforms after-NTK by approximately 5\% in terms of the P95 metric, while the improvements relative to NTK and the standard DNN are even more substantial, approximately 17\% and 30\%, respectively. These results clearly highlight the effectiveness of our novel \textit{kernel learning} approach: by explicitly optimizing the NTK, our proposed estimator achieves significant and non-trivial gains in performance.

All hyperparameters and model selection details are outlined in Appendix~\ref{app:num}.

\section{Discussion} 
\label{sec:discussion}
We have theoretically demonstrated that the after-training NTK can be interpreted as a form of ``gradient boosting," in which the residuals of a DNN are modeled through kernel ridge regression. Moreover, recent studies~\cite{fort2020deep, geiger2020disentangling, atanasov2021neural, vyas2022limitations, lauditi2025adaptive} have shown that by judiciously selecting the ridge penalty parameter $z$, it is possible to eliminate the dependence on the DNN altogether.

Motivated by these theoretical findings, we propose a novel algorithm, which we call NTK-KARE, to explicitly train the NTK by minimizing the Kernel Alignment Risk Estimator~\cite{jacot2018neural} to minimize generalization error. Consistent with the findings discussed above, NTK-KARE outperforms both the after-NTK and standard DNN across various datasets, achieving performance on par with that of Recursive Feature Machines on the UCI challenge. 

This perspective challenges the conventional role of DNNs by positioning NTKs as trainable, high-capacity models capable of effective feature learning.

Like other kernel methods, NTK-KARE lacks the ability to efficiently scale model size with the training dataset size, a limitation not encountered by standard DNNs. As future work, we plan to thoroughly explore existing solutions from the modern large-scale kernel learning literature, particularly those employing advanced matrix preconditioning techniques. Notable examples include FALKON~\citep{rudi2017falkon, meanti2020kernel} and EigenPro~\citep{ma2017diving, ma2019kernel, abedsoltan2023toward}.

We believe that further investigation into this novel learning algorithm can provide a robust framework for minimizing the generalization error of modern kernel methods. An important direction for future research is establishing rigorous theoretical guarantees for KARE-based training methods, which would enhance our understanding of their convergence properties and generalization performance.



\clearpage
\begin{ack}
Johannes Schwab is at Swiss Finance Institute, EPFL. Bryan Kelly is at AQR Capital Management, Yale School of Management, and NBER. Semyon Malamud is at Swiss Finance Institute, EPFL, and CEPR, and is a consultant to AQR. Andrea Xu is at AQR Capital Management. Semyon Malamud gratefully acknowledges the financial support of the Swiss Finance Institute and the Swiss National Science Foundation, Grant 100018\_192692. 

\end{ack}
\bibliographystyle{aer}
\bibliography{OLS}

@inproceedings{wilson2016deep,
  title={Deep kernel learning},
  author={Wilson, Andrew Gordon and Hu, Zhiting and Salakhutdinov, Ruslan and Xing, Eric P},
  booktitle={Artificial intelligence and statistics},
  pages={370--378},
  year={2016},
  organization={PMLR}
}

@inproceedings{patil2021uniform,
  title={Uniform consistency of cross-validation estimators for high-dimensional ridge regression},
  author={Patil, Pratik and Wei, Yuting and Rinaldo, Alessandro and Tibshirani, Ryan},
  booktitle={International conference on artificial intelligence and statistics},
  pages={3178--3186},
  year={2021},
  organization={PMLR}
}

@article{xu2021consistent,
  title={Consistent risk estimation in moderately high-dimensional linear regression},
  author={Xu, Ji and Maleki, Arian and Rad, Kamiar Rahnama and Hsu, Daniel},
  journal={IEEE Transactions on Information Theory},
  volume={67},
  number={9},
  pages={5997--6030},
  year={2021},
  publisher={IEEE}
}

@article{craven1978smoothing,
  title={Smoothing noisy data with spline functions: estimating the correct degree of smoothing by the method of generalized cross-validation},
  author={Craven, Peter and Wahba, Grace},
  journal={Numerische mathematik},
  volume={31},
  number={4},
  pages={377--403},
  year={1978},
  publisher={Springer}
}

@article{misiakiewicz2024non,
  title={A non-asymptotic theory of kernel ridge regression: deterministic equivalents, test error, and gcv estimator},
  author={Misiakiewicz, Theodor and Saeed, Basil},
  journal={arXiv preprint arXiv:2403.08938},
  year={2024}
}

@article{Delgado2014,
author = {Manuel Fernàndez-Delgado and
         Eva Cernadas and
         Senén Barro 
         and Dinani Amorim},
title = {Do we Need Hundreds of Classifiers to Solve Real World Classification Problems?},
journal = {The Journal of Machine Learning Research, 15(1):3133–3181},
year = {2014}

}

@article{fort2020deep,
  title={Deep learning versus kernel learning: an empirical study of loss landscape geometry and the time evolution of the neural tangent kernel},
  author={Fort, Stanislav and Dziugaite, Gintare Karolina and Paul, Mansheej and Kharaghani, Sepideh and Roy, Daniel M and Ganguli, Surya},
  journal={Advances in Neural Information Processing Systems},
  volume={33},
  pages={5850--5861},
  year={2020}
}

@article{lee2019wide,
  title={Wide neural networks of any depth evolve as linear models under gradient descent},
  author={Lee, Jaehoon and Xiao, Lechao and Schoenholz, Samuel and Bahri, Yasaman and Novak, Roman and Sohl-Dickstein, Jascha and Pennington, Jeffrey},
  journal={Advances in neural information processing systems},
  volume={32},
  year={2019}
}

@article{jacot2020kernel,
  title={Kernel alignment risk estimator: Risk prediction from training data},
  author={Jacot, Arthur and Simsek, Berfin and Spadaro, Francesco and Hongler, Cl{\'e}ment and Gabriel, Franck},
  journal={Advances in neural information processing systems},
  volume={33},
  pages={15568--15578},
  year={2020}
}

@article{atanasov2021neural,
  title={Neural networks as kernel learners: The silent alignment effect},
  author={Atanasov, Alexander and Bordelon, Blake and Pehlevan, Cengiz},
  journal={arXiv preprint arXiv:2111.00034},
  year={2021}
}

@article{chizat1812note,
  title={A note on lazy training in supervised differentiable programming.(2018)},
  author={Chizat, Lenaic and Bach, Francis},
  journal={arXiv preprint arXiv:1812.07956},
  year={2018}
}

@article{lauditi2025adaptive,
  title={Adaptive kernel predictors from feature-learning infinite limits of neural networks},
  author={Lauditi, Clarissa and Bordelon, Blake and Pehlevan, Cengiz},
  journal={arXiv preprint arXiv:2502.07998},
  year={2025}
}

@inproceedings{arora2019fine,
  title={Fine-grained analysis of optimization and generalization for overparameterized two-layer neural networks},
  author={Arora, Sanjeev and Du, Simon and Hu, Wei and Li, Zhiyuan and Wang, Ruosong},
  booktitle={International conference on machine learning},
  pages={322--332},
  year={2019},
  organization={PMLR}
}

@article{daniely2017sgd,
  title={SGD learns the conjugate kernel class of the network},
  author={Daniely, Amit},
  journal={Advances in neural information processing systems},
  volume={30},
  year={2017}
}

@article{daniely2016toward,
  title={Toward deeper understanding of neural networks: The power of initialization and a dual view on expressivity},
  author={Daniely, Amit and Frostig, Roy and Singer, Yoram},
  journal={Advances in neural information processing systems},
  volume={29},
  year={2016}
}

@article{ghorbani2020neural,
title={When do neural networks outperform kernel methods?},
author={Ghorbani, Behrooz and Mei, Song and Misiakiewicz, Theodor and Montanari, Andrea},
journal={Advances in Neural Information Processing Systems},
volume={33},
pages={14820--14830},
year={2020}
}

@article{zou2019improved,
  title={An improved analysis of training over-parameterized deep neural networks},
  author={Zou, Difan and Gu, Quanquan},
  journal={Advances in neural information processing systems},
  volume={32},
  year={2019}
}

@inproceedings{du2019gradient,
title={Gradient descent finds global minima of deep neural networks},
author={Du, Simon and Lee, Jason and Li, Haochuan and Wang, Liwei and Zhai, Xiyu},
booktitle={International Conference on Machine Learning},
pages={1675--1685},
year={2019},
organization={PMLR}
}

@inproceedings{belkin2018understand,
title={To understand deep learning we need to understand kernel learning},
author={Belkin, Mikhail and Ma, Siyuan and Mandal, Soumik},
booktitle={International Conference on Machine Learning},
pages={541--549},
year={2018},
organization={PMLR}
}

@misc{belkin2018reconciling,
title={Reconciling modern machine learning and the bias-variance trade-off. arXiv e-prints},
author={Belkin, M and Hsu, D and Ma, S and Mandal, S},
year={2018},
publisher={December}
}

@InProceedings{pmlr-v97-rahaman19a,
  title = 	 {On the Spectral Bias of Neural Networks},
  author =       {Rahaman, Nasim and Baratin, Aristide and Arpit, Devansh and Draxler, Felix and Lin, Min and Hamprecht, Fred and Bengio, Yoshua and Courville, Aaron},
  booktitle = 	 {Proceedings of the 36th International Conference on Machine Learning},
  pages = 	 {5301--5310},
  year = 	 {2019},
  editor = 	 {Chaudhuri, Kamalika and Salakhutdinov, Ruslan},
  volume = 	 {97},
  series = 	 {Proceedings of Machine Learning Research},
  month = 	 {09--15 Jun},
  publisher =    {PMLR},
  pdf = 	 {http://proceedings.mlr.press/v97/rahaman19a/rahaman19a.pdf},
  url = 	 {https://proceedings.mlr.press/v97/rahaman19a.html},
  abstract = 	 {Neural networks are known to be a class of highly expressive functions able to fit even random input-output mappings with 100% accuracy. In this work we present properties of neural networks that complement this aspect of expressivity. By using tools from Fourier analysis, we highlight a learning bias of deep networks towards low frequency functions – i.e. functions that vary globally without local fluctuations – which manifests itself as a frequency-dependent learning speed. Intuitively, this property is in line with the observation that over-parameterized networks prioritize learning simple patterns that generalize across data samples. We also investigate the role of the shape of the data manifold by presenting empirical and theoretical evidence that, somewhat counter-intuitively, learning higher frequencies gets easier with increasing manifold complexity.}
}

@article{hastie2019surprises,
title={Surprises in high-dimensional ridgeless least squares interpolation},
author={Hastie, Trevor and Montanari, Andrea and Rosset, Saharon and Tibshirani, Ryan J},
journal={arXiv preprint arXiv:1903.08560},
year={2019}
}

@article{kelly2024virtue,
title={The virtue of complexity in return prediction},
author={Kelly, Bryan and Malamud, Semyon and Zhou, Kangying},
journal={The Journal of Finance},
volume={79},
number={1},
pages={459--503},
year={2024},
publisher={Wiley Online Library}
}

@article{geiger2020disentangling,
  title={Disentangling feature and lazy training in deep neural networks},
  author={Geiger, Mario and Spigler, Stefano and Jacot, Arthur and Wyart, Matthieu},
  journal={Journal of Statistical Mechanics: Theory and Experiment},
  volume={2020},
  number={11},
  pages={113301},
  year={2020},
  publisher={IOP Publishing}
}

@article{vyas2022limitations,
  title={Limitations of the NTK for understanding generalization in deep learning},
  author={Vyas, Nikhil and Bansal, Yamini and Nakkiran, Preetum},
  journal={arXiv preprint arXiv:2206.10012},
  year={2022}
}

@article{radhakrishnan2024mechanism,
  title={Mechanism for feature learning in neural networks and backpropagation-free machine learning models},
  author={Radhakrishnan, Adityanarayanan and Beaglehole, Daniel and Pandit, Parthe and Belkin, Mikhail},
  journal={Science},
  volume={383},
  number={6690},
  pages={1461--1467},
  year={2024},
  publisher={American Association for the Advancement of Science}
}

@inproceedings{yang2021tensor,
  title={Tensor programs iv: Feature learning in infinite-width neural networks},
  author={Yang, Greg and Hu, Edward J},
  booktitle={International Conference on Machine Learning},
  pages={11727--11737},
  year={2021},
  organization={PMLR}
}

@inproceedings{wei2022more,
  title={More than a toy: Random matrix models predict how real-world neural representations generalize},
  author={Wei, Alexander and Hu, Wei and Steinhardt, Jacob},
  booktitle={International conference on machine learning},
  pages={23549--23588},
  year={2022},
  organization={PMLR}
}

@article{jacot2018neural,
  title={Neural tangent kernel: Convergence and generalization in neural networks},
  author={Jacot, Arthur and Gabriel, Franck and Hongler, Cl{\'e}ment},
  journal={Advances in neural information processing systems},
  volume={31},
  year={2018}
}

@article{golub1979generalized,
title={Generalized cross-validation as a method for choosing a good ridge parameter},
author={Golub, Gene H and Heath, Michael and Wahba, Grace},
journal={Technometrics},
volume={21},
number={2},
pages={215--223},
year={1979},
publisher={Taylor \& Francis}
}

@article{ma2017diving,
  title={Diving into the shallows: a computational perspective on large-scale shallow learning},
  author={Ma, Siyuan and Belkin, Mikhail},
  journal={Advances in neural information processing systems},
  volume={30},
  year={2017}
}

@article{meanti2020kernel,
  title={Kernel methods through the roof: handling billions of points efficiently},
  author={Meanti, Giacomo and Carratino, Luigi and Rosasco, Lorenzo and Rudi, Alessandro},
  journal={Advances in Neural Information Processing Systems},
  volume={33},
  pages={14410--14422},
  year={2020}
}

@article{ma2019kernel,
  title={Kernel machines that adapt to GPUs for effective large batch training},
  author={Ma, Siyuan and Belkin, Mikhail},
  journal={Proceedings of Machine Learning and Systems},
  volume={1},
  pages={360--373},
  year={2019}
}

@inproceedings{abedsoltan2023toward,
  title={Toward large kernel models},
  author={Abedsoltan, Amirhesam and Belkin, Mikhail and Pandit, Parthe},
  booktitle={International Conference on Machine Learning},
  pages={61--78},
  year={2023},
  organization={PMLR}
}

@article{rudi2017falkon,
  title={Falkon: An optimal large scale kernel method},
  author={Rudi, Alessandro and Carratino, Luigi and Rosasco, Lorenzo},
  journal={Advances in neural information processing systems},
  volume={30},
  year={2017}
}

@article{arora2019harnessing,
  title={Harnessing the power of infinitely wide deep nets on small-data tasks},
  author={Arora, Sanjeev and Du, Simon S and Li, Zhiyuan and Salakhutdinov, Ruslan and Wang, Ruosong and Yu, Dingli},
  journal={arXiv preprint arXiv:1910.01663},
  year={2019}
}

@article{simon2023more,
  title={More is better in modern machine learning: when infinite overparameterization is optimal and overfitting is obligatory},
  author={Simon, James B and Karkada, Dhruva and Ghosh, Nikhil and Belkin, Mikhail},
  journal={arXiv preprint arXiv:2311.14646},
  year={2023}
}

\appendix 

\newpage 

\section*{Appendix}

\section{Details for Numerical Experiments}\label{app:num}

\subsection{Hyperparameters}
\begin{table}[h!]
\caption{Hyperparameters used in the simulation, Higgs, and MNIST experiments.}
\centering
\begin{tabular}{lccc}
\toprule
Hyperparameter & Simulations & Higgs & MNIST\\
\midrule
$n$ & 1'000 & 1'000 & 1'000\\
$n_{\text{test}}$ & 1'000 & 1'000 & 1'000\\
$d$ & 10 & 31 & 576 ($24\times 24$)\\
$\tilde{\lambda}$ & $10^i$, $i=-5,\dots,5$ & $10^i$, $i=-3,\dots,3$ & $10^i$, $i=-3,\dots,3$\\
$k$ & 10 & 10 & 10\\
$\text{optimizer}$ & full-batch GD & full-batch GD & full-batch GD\\
\midrule
$z_{\text{\rm KARE}}$ & 0.1 & 0.1 & 0.1\\
$epochs_{\text{\rm KARE}}$ & 100 & 300 & 300 \\
$\eta_{\text{\rm KARE}}$ & 100 & 1 if depth = 4 else 10 & 1 if depth = 4 else 10 \\
$\text{activation function}$ & GELU & GELU & GELU\\
\midrule
$\text{epochs}_{\text{MSE}}$ & 10'000 & 10'000 & 10'000\\
$\eta_{\text{MSE}}$ & 0.1 & 0.1 & 0.1\\
\midrule
$\text{depth}$ & 1 & \{2,4\} & \{2,4\}\\
$\text{width}$ & 64 & \{32,64,128\} & \{32,64,128\}\\
\midrule
$\text{weights}$ & \multicolumn{3}{c}{$w \sim \mathcal{U}\Bigl(-\sqrt{\frac{6}{n_{\text{in}}}},\,\sqrt{\frac{6}{n_{\text{in}}}}\Bigr)$, $n_{in}$ = number of input units}\\[1ex]
$\text{biases}$ & \multicolumn{3}{c}{$b \sim \mathcal{U}\Bigl(-\frac{1}{\sqrt{n_{\text{in}}}},\,\frac{1}{\sqrt{n_{\text{in}}}}\Bigr)$, $n_{in}$ = number of input units}\\
\bottomrule
\end{tabular}
\label{tab:hyperparam-exp}
\end{table} 
\begin{table}[h!]
\caption{Hyperparameter grids used for the baseline models in the UCI and Tabular data experiments}
\centering
\begin{tabular}{lc}
\toprule
Hyperparameters & UCI\\
\midrule
$\text{optimizer}$ & SGD \\
$\text{activation function}$ & GELU\\
\midrule
$z_{\text{\rm KARE}}$ & 0.1 \\
$\text{epochs}_{\text{\rm KARE}}$ & 300 \\
$\text{batch-size}_{KARE}$ & 32 if $n_{\text{total}} \geq 5000$ else $n_{\text{train}}$ \\
$\eta_{\text{\rm KARE}}$ & \{10, 5, 1, 0.1\} if depth $\geq$ 3, \{100, 10, 1, 0.1\} otherwise \\
\midrule
$\text{epochs}_{\text{MSE}}$ & 400 \\
$\eta_{\text{MSE}}$ & \{0.1, 0.01, 0.001, 0.0001\} \\
$\text{batch-size}_{MSE}$ & 32\\
\midrule
$\text{depth}$ & \{1,2,3,4,5\} \\
$\text{width}$ & \{32,64,128,256,512\} \\
\midrule
$\text{weights}$ & $w \sim \mathcal{U}\Bigl(-\sqrt{\frac{6}{n_{\text{in}}}},\,\sqrt{\frac{6}{n_{\text{in}}}}\Bigr)$, $n_{in}$ = number of input units \\
$\text{biases}$ & $b \sim \mathcal{U}\Bigl(-\frac{1}{\sqrt{n_{\text{in}}}},\,\frac{1}{\sqrt{n_{\text{in}}}}\Bigr)$, $n_{in}$ = number of input units \\
\bottomrule
\end{tabular}
\label{tab:hyperparam-exp-uci-openml}
\end{table}

Note that we use a smaller learning rate for deep networks to prevent chaotic dynamics in the gradients.

\subsection{Details}\label{det-sec}

\textbf{For the MNIST Dataset.} We sample $n$ images of digits 7 and 9 from the MNIST training dataset (image size $d = 24 \times 24$, edge pixels cropped, all pixels rescaled down to $[0, 1]$ and recentered around the mean value) and label each of them with $+1$ and $-1$ labels. We fit our models on $n = 1'000$ training data and compute the error on $1'000$ test observations. We repeat the experience $k=10$ times and report the average out-of-sample MSE for a grid of $\tilde \lambda$.

\vspace{1em}

\textbf{For the Higgs Dataset.} We randomly choose $n=1'000$ samples among those with no missing features marked with $-999$ from the Higgs training dataset. The samples have $d = 31$ features, and we normalize each feature column down to $[-1, 1]$ by dividing by the maximum absolute value observed among the selected samples. We replace the categorical labels ‘s’ and ‘b’ with regression values $+1$ and $-1$ respectively. We fit our models on $n = 1'000$ training data and compute the error on $1'000$ test observations. We repeat the experience $k=10$ times and report the average out-of-sample MSE for a grid of $\tilde \lambda$.

\textbf{For the UCI Datasets.}
We evaluate the out-of-sample performance of our models across a grid of hyperparameters, iterating over the 121 datasets. We adopt a single train-test procedure, using the same data splits (train, test) used by \cite{radhakrishnan2024mechanism} for hyperparameter selection. For binary classification tasks, we map the targets to ${-1,1}$ and apply mean squared error (MSE) to fit the DNN for the NTK model. For general classification tasks with NTK-KARE, we one-hot encode the classes using targets in ${-1,1}$. For explicit DNN training in general classification, we employ cross-entropy. For datasets with fewer than $100$ observations, we standardize the data using \textit{RobustScaler()} from sklearn. For datasets exceeding this threshold, we clip the data at the $5\%$ and $95\%$ percentiles and then standardize it to zero mean and unit variance. For NTK and NTK-KARE, we report the highest out-of-sample accuracy obtained over the grid of hyperparameters. During the model selection process, we exclude any hyperparameter combinations that cause training divergence, such as those resulting in singular matrix inversions.

\textbf{Experiment compute resources}
The simulations, MNIST, and Higgs experiments were conducted locally using an ASUS GeForce RTX 4080 Super TUF 16G GPU, though they can also be executed on CPU-only hardware. For the UCI datasets, we stratified the experiments based on dataset size. For datasets with fewer than $5'000$ observations, we trained the models using NVIDIA L40S GPUs on an internal compute cluster, encompassing 103 out of 121 datasets. Fitting both NTK-KARE and DNN/NTK models required a total of 48 GPU-hours.

For datasets with sample sizes between $5'000$ and $100'000$, we utilized NVIDIA H100 SXM5 GPUs on an internal compute cluster. Training all models on these datasets consumed a total of 199 GPU-hours. Lastly, for the \textit{miniboone} dataset with $130'000$ observations, we employed 2 NVIDIA H100 SXM5 GPUs per training task. Training NTK-KARE and DNN/NTK models on this dataset required 74 GPU-hours. In total, we consumed 321 GPU-hours to run our experiments.

\begin{table}[h!]
\caption{Datasets sources and download links}
\centering
\begin{tabular}{lcc}
\toprule
Datasets & Source & Link\\
\midrule
Higgs & Kaggle & \href{https://www.kaggle.com/competitions/higgs-boson/data}{Kaggle Higgs Boson Competition} \\
MNIST & PyTorch &\href{https://pytorch.org/vision/main/_modules/torchvision/datasets/mnist.html#MNIST}{See torchvision.datasets.mnist}\\
UCI & UC Irvine & \href{http://persoal.citius.usc.es/manuel.fernandez.delgado/papers/jmlr}{Delgado (2014) datasets} \\
\bottomrule
\end{tabular}
\label{tab:dataset-sources}
\end{table}


\newpage

\end{document}